\newtheorem{theorem}{Theorem}[section]
\rpaurl\url{http://www.cs.toronto.edu/~rpa}
\begin{document}
\begin{frontmatter}
  \title{Learning the Structure of Deep Sparse Graphical Models}
  \runtitle{Learning the Structure of Deep Sparse Graphical Models}
  \begin{aug}
    \author{\fnms{Ryan P.} \snm{Adams}\thanksref{t1}\corref{}%
      \ead[label=u1,url]{http://www.cs.toronto.edu/~rpa/}%
      \ead[label=e1]{rpa@cs.toronto.edu}},
    \author{\fnms{Hanna M.} \snm{Wallach}\ead[label=e2]{wallach@cs.umass.edu}}
    \and
    \author{\fnms{Zoubin}
      \snm{Ghahramani}\ead[label=e3]{zoubin@eng.cam.ac.uk}}
    \runauthor{R.P.\ Adams et al.}

    \affiliation{University of Toronto, University of Massachusetts\\
      and University of Cambridge}
    \address{Ryan P. Adams\\
      Department of Computer Science\\
      University of Toronto\\
      10 King's College Road\\
      Toronto, Ontario M5S 3G4, CA\\
      \printead{e1}}
    \address{Hanna M. Wallach\\
      Department of Computer Science\\
      University of Massachusetts Amherst\\
      140 Governors Drive\\
      Amherst, MA 01003, USA\\
      \printead{e2}}
    \address{Zoubin Ghahramani\\
      Department of Engineering\\
      University of Cambridge\\
      Trumpington Street\\
      Cambridge CB2 1PZ, UK\\
      \printead*{e3}}
    
    \thankstext{t1}{\rpaurl}
  \end{aug}
  
  \begin{abstract}
    Deep belief networks are a powerful way to model complex probability
    distributions.  However, learning the structure of a belief network,
    particularly one with hidden units, is difficult. The Indian buffet
    process has been used as a nonparametric Bayesian prior on the directed
    structure of a belief network with a single infinitely wide hidden layer.
    In this paper, we introduce the cascading Indian buffet process (CIBP),
    which provides a nonparametric prior on the structure of a layered,
    directed belief network that is unbounded in both depth and width, yet
    allows tractable inference.  We use the CIBP prior with the nonlinear
    Gaussian belief network so each unit can additionally vary its behavior
    between discrete and continuous representations.  We provide Markov chain
    Monte Carlo algorithms for inference in these belief networks and explore
    the structures learned on several image data sets.
  \end{abstract}
\end{frontmatter}

\section{Introduction}
\label{sec:introduction}
The belief network or directed probabilistic graphical model
\citep{pearl-1988a} is a popular and useful way to represent complex
probability distributions.  Methods for learning the parameters of such
networks are well-established.  Learning network structure, however, is
more difficult, particularly when the network includes unobserved hidden
units. Then, not only must the structure (edges) be determined, but the
number of hidden units must also be inferred. This paper contributes a
novel nonparametric Bayesian perspective on the general problem of learning
graphical models with hidden variables. Nonparametric Bayesian approaches
to this problem are appealing because they can avoid the difficult
computations required for selecting the appropriate \emph{a posteriori}
dimensionality of the model. Instead, they introduce an infinite number of
parameters into the model \emph{a priori} and inference determines the
subset of these that actually contributed to the observations. The Indian
buffet process
(IBP)~\citep{griffiths-ghahramani-2006a,ghahramani-etal-2007a} is one
example of a nonparametric Bayesian prior and it has previously been used
to introduce an infinite number of hidden units into a belief network with
a single hidden layer~\citep{wood-etal-2006a}.

This paper unites two important areas of research: nonparametric Baye-sian
methods and deep belief networks. To date, work on deep belief networks has
not addressed the general structure-learning problem. We therefore present
a unifying framework for solving this problem using nonparametric Bayesian
methods. We first propose a novel extension to the Indian buffet process
--- the cascading Indian buffet process (CIBP) --- and use the
Foster-Lyapunov criterion to prove convergence properties that make it
tractable with finite computation. We then use the CIBP to generalize the
single-layered, IBP-based, directed belief network to construct
multi-layered networks that are both infinitely wide and infinitely deep,
and discuss useful properties of such networks including expected in-degree
and out-degree for individual units. Finally, we combine this framework
with the powerful continuous sigmoidal belief network
framework~\citep{frey-1997a}. This allows us to infer the type (i.e.,
discrete or continuous) of individual hidden units---an important property
that is not widely discussed in previous work. To summarize, we present a
flexible, nonparametric framework for directed deep belief networks that
permits inference of the number of hidden units, the directed edge
structure between units, the depth of the network and the most appropriate
type for each unit.

\section{Finite Belief Networks}
\label{sec:finite-bns}
We consider belief networks that are layered directed acyclic graphs with
both visible and hidden units.  Hidden units are random variables that
appear in the joint distribution described by the belief network but are
not observed.  We index layers by~$m$, increasing with depth up to~$M$, and
allow visible units (i.e.,\ observed variables) only in layer~${m\!=\!0}$.
We require that units in layer~$m$ have parents only in layer~${m\!+\!1}$.
Within layer~$m$, we denote the number of units as~$K^{(m)}$ and index the
units with~$k$ so that the~$k$th unit in layer~$m$ is denoted~$u^{(m)}_k$.
We use the notation~$\bu^{(m)}$ to refer to the vector of all~$K^{(m)}$
units for layer~$m$ together.  A binary~${K^{(m-1)}\!\times\!K^{(m)}}$
matrix~$\bZ^{(m)}$ specifies the edges from layer~$m$ to layer~${m\!-\!1}$,
so that element~${Z^{(m)}_{k,k'}\!=\!1}$ iff there is an edge from
unit~$u^{(m)}_{k'}$ to unit~$u^{(m-1)}_{k}$.

A unit's activation is determined by a weighted sum of its parent
units.  The weights for layer~$m$ are denoted by
a~${K^{(m-1)}\!\times\!K^{(m)}}$ real-valued matrix~$\bW^{(m)}$, so
that the activations for the units in layer~$m$ can be
written as~${\by^{(m)}\!=\!(\bW^{(m+1)}\!\odot\!\bZ^{(m+1)})
  \bu^{(m+1)}\!+\!  \bgamma^{(m)}}$, where~$\bgamma^{(m)}$ is
a~$K^{(m)}$-dimen-sional vector of \emph{bias weights} and the binary
operator~$\odot$ indicates the Hada-mard (elementwise) product.

To achieve a wide range of possible behaviors for the units, we use the \textit{nonlinear Gaussian belief network}
(NLGBN) \citep{frey-1997a,frey-hinton-1999a} framework.  In the NLGBN,
the distribution on~$u^{(m)}_k$ arises from adding zero
mean Gaussian noise with precision~$\nu^{(m)}_k$ to the activation
sum~$y^{(m)}_k$.  This noisy sum is then transformed with a sigmoid
function~$\sigma(\cdot)$ to arrive at the value of the unit.  We
modify the NLGBN slightly so that the sigmoid function is from the
real line to~$(-1,1)$, i.e.~\({\sigma:\reals\!\to\!(-1,1)}\),
via ${\sigma(x)=2/(1+\exp\{x\})-1}$.  The distribution of~$u^{(m)}_k$
given its parents is then
\begin{align*}
  p(u^{(m)}_k| y^{(m)}_k, \nu^{(m)}_k) =
  \frac{
    \exp\left\{\!-\frac{\nu^{(m)}_k}{2}\!
      \left[\sigma^{-1}(u^{(m)}_k)\!-\!y^{(m)}_k\right]^2\right\}}
  {\sigma'(\sigma^{-1}(u^{(m)}_k))\sqrt{2\pi/\nu^{(m)}_k}}
\end{align*}
where~\(\sigma'(x)\!=\!\frac{\mathrm{d}}{\mathrm{d}x}\sigma(x)\).  As discussed
in \cite{frey-1997a} and shown in Figure~\ref{fig:modes},
different choices of~$\nu^{(m)}_k$ yield different belief unit behaviors
from effectively discrete binary units to nonlinear continuous units.
In
the multilayered construction we have described here, the joint
distribution over the units in a NLGBN is
\begin{multline}
  p(\{\bu^{(m)}\}^M_{m=0}\given\{\bZ^{(m)},\bW^{(m)}\}^M_{m=1},
  \{\bgamma^{(m)}, \{\nu^{(m)}_k\}^{K^{(m)}}_{k=1}\}^M_{m=0}, ) =\\
  \left[\prod^{K^{(M)}}_{k=1}p(u^{(M)}\given\gamma^{(M)}_k,\nu^{(M)}_k)\right]
  \prod^{M-1}_{m=0}\prod^{K^{(m)}}_{k=1}
  p(u^{(m)}_k\given y^{(m)}_k, \nu^{(m)}_k).
\end{multline}

\section{Infinite Belief Networks}
\label{sec:model}
Conditioned on the number of layers~$M$, the layer widths~$K^{(m)}$ and the
network structures~$\bZ^{(m)}$, inference in belief networks can be
straightforwardly implemented using Markov chain Monte
Carlo~\citep{neal-1992a}.  Learning the depth, width and structure,
however, presents significant computational challenges.  In this section,
we present a novel nonparametric prior, the \emph{cascading Indian buffet
  process}, for multi-layered belief networks that are both infinitely wide
and infinitely deep.  By using an infinite prior we avoid the need for the
complex dimensionality-altering proposals that would otherwise be required
during inference.

\begin{figure}
  \centering%
  \subfloat[$\nu=\half$]{%
    \includegraphics[width=0.3\textwidth]{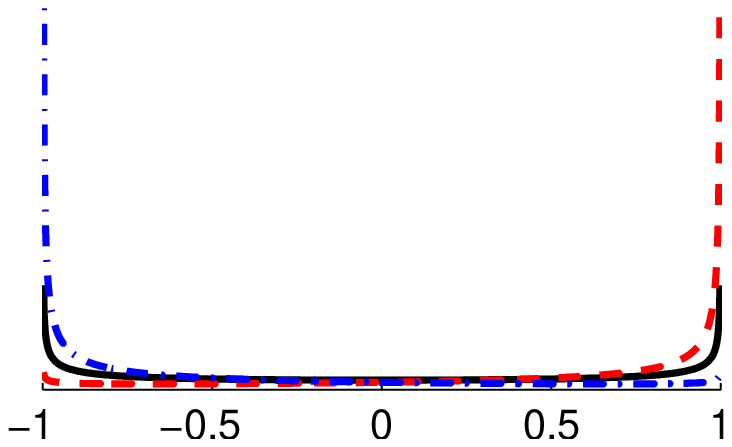}%
  }~%
  \subfloat[$\nu=5$]{%
    \includegraphics[width=0.3\textwidth]{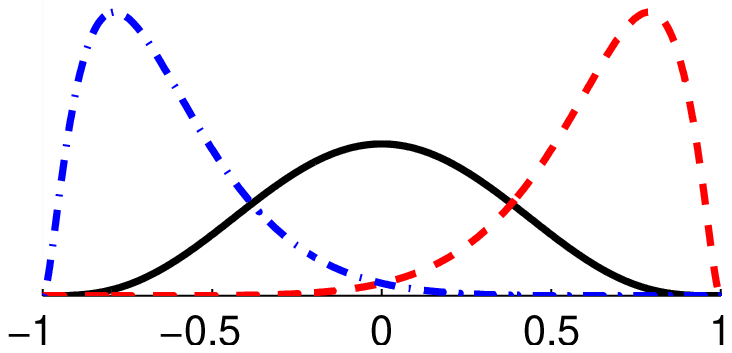}%
  }~%
  \subfloat[$\nu=1000$]{%
    \includegraphics[width=0.3\textwidth]{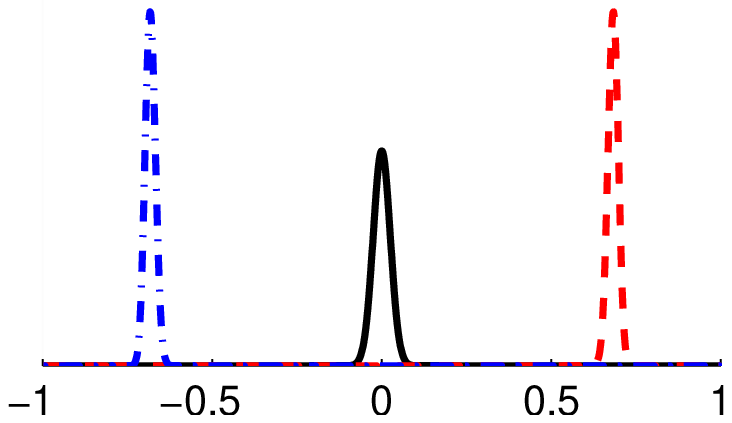}%
  }
    \caption{\small Three modes of operation for the NLGBN unit. The black
      solid line shows the zero mean distribution (i.e.\ $y\!=\!0$), the
      red dashed line shows a pre-sigmoid mean of $+1$ and the blue
      dash-dot line shows a pre-sigmoid mean of~$-1$.  (a)~Binary behavior
      from small precision.  (b)~Roughly Gaussian behavior from medium
      precision.  (c)~Deterministic behavior from large precision.}%
  \label{fig:modes}%
\end{figure}

\subsection{The Indian buffet process}
\label{sec:ibp}
Section~\ref{sec:finite-bns} used the binary matrix~$\bZ^{(m)}$ as a
convenient way to represent the edges connecting layer~$m$ to
layer~${m\!-\!1}$.  We stated that~$\bZ^{(m)}$ was a
finite~${K^{(m-1)}\!\times\!K^{(m)}}$ matrix.  We can use the
\textit{Indian buffet process} (IBP)~\citep{griffiths-ghahramani-2006a} to
allow this matrix to have an infinite number of columns.  We assume the
two-parameter IBP \citep{ghahramani-etal-2007a}, and
use~${\bZ^{(m)}\!\sim\!  \distIBP(\alpha,\beta)}$ to indicate that the
matrix~\(\bZ^{(m)}\!\in\!\{0,1\}^{K^{(m-1)}\times\infty}\) is drawn from an
IBP with parameters~$\alpha,\beta>0$.  The eponymous metaphor for the IBP
is a restaurant with an infinite number of dishes available.  Each customer
chooses a finite set of dishes to taste.  The rows of the binary matrix
correspond to customers and the columns correspond to dishes.  If the~$j$th
customer tastes the~$k$th dish, then~${Z_{j,k}\!=\!1}$,
otherwise~${Z_{j,k}\!=\!0}$.  The first customer into the restaurant
samples a number of dishes that is Poisson distributed with
parameter~$\alpha$.  After that, when the~$j$th customer enters the
restaurant, she selects dish~$k$ with
probability~$\eta_k/(j\!+\!\beta\!-\!1)$, where~$\eta_k$ is the number of
previous customers that have tried the~$k$th dish.  She then chooses a
number of additional dishes to taste that is Poisson distributed with
parameter~$\alpha\beta/(j\!+\!\beta\!-\!1)$.  Even though each customer
chooses dishes based on their popularity with previous customers, the rows
and columns of the resulting matrix~$\bZ^{(m)}$ are infinitely
exchangeable.

As in \cite{wood-etal-2006a}, if the model of Section~\ref{sec:finite-bns}
had only a single hidden layer, i.e.\ \(M\!=\!1\), then the IBP could be
used to make that layer infinitely wide.  While a belief network with an
infinitely-wide hidden layer can represent any probability distribution
arbitrarily closely~\citep{leroux-bengio-2008a}, it is not necessarily a
useful prior on such distributions.  Without intra-layer connections, the
the hidden units are independent \textit{a priori}.  This ``shallowness''
is a strong assumption that weakens the model in practice and the explosion
of recent literature on \textit{deep belief networks} (see, e.g.
\cite{hinton-etal-2006a,hinton-salakhutdinov-2006a})
speaks to the empirical success of belief networks with more hidden
structure.

\subsection{The cascading Indian buffet process}
\label{sec:cibp}
To build a prior on belief networks that are unbounded in both width and
depth, we use an IBP-like object that provides an infinite sequence of
binary matrices~\({\bZ^{(0)}, \bZ^{(1)}, \bZ^{(2)}, \cdots}\).  We require
the matrices in this sequence to inherit the useful sparsity properties of
the IBP, with the constraint that the columns from~$\bZ^{(m-1)}$ correspond
to the rows in~$\bZ^{(m)}$.  We interpret each matrix~$\bZ^{(m)}$ as
specifying the directed edge structure from layer~$m$ to layer~${m\!-\!1}$,
where both layers have a potentially-unbounded width.

We propose the cascading Indian buffet process to provide a prior with
these properties.  The CIBP extends the vanilla IBP in the following way:
each of the ``dishes'' in the restaurant are also ``customers'' in another
Indian buffet process.  The columns in one binary matrix correspond to the
rows in another binary matrix.  The CIBP is infinitely exchangeable in the
rows of matrix~$\bZ^{(0)}$.  Each of the IBPs in the recursion is
exchangeable in its rows and columns, so it does not change the probability
of the data to propagate a permutation back through the matrices.

If there are~$K^{(0)}$ customers in the first restaurant, a surprising
result is that, for finite~$K^{(0)}$,~$\alpha$, and~$\beta$, the CIBP
recursion terminates with probability one.  By ``terminate'' we mean that
at some point the customers do not taste any dishes and all deeper
restaurants have neither dishes nor customers.  Here we only sketch the
intuition behind this result.  A proof is provided in Appendix~\ref{sec:appendix}.

The matrices in the CIBP are constructed in a sequence, starting
with ${m\!=\!0}$.  The number of nonzero columns in matrix~$\bZ^{(m+1)}$,
$K^{(m+1)}$, is determined entirely by~$K^{(m)}$, the number of active
nonzero columns in~$\bZ^{(m)}$.  We require that for some
matrix~$\bZ^{(m)}$, there are no nonzero columns.  For this purpose, we can
disregard the fact that it is a matrix-valued stochastic process and
instead consider the Markov chain that results on the number of nonzero
columns.  Figure~\ref{fig:depth-traces} shows three traces of such a Markov
chain on~$K^{(m)}$.  If we
define~${\lambda(K;\alpha,\beta)=\alpha\sum^{K}_{k'=1}\frac{\beta}{k'+\beta-1}}$,
then the Markov chain has the transition distribution
\begin{align}
  \label{eqn:markov-transitions}
  p(K^{(m+1)}=k\given K^{(m)},\alpha,\beta)
  &=
  \frac{1}{k!}
  \exp\left\{-\lambda(K^{(m)};\alpha,\beta)\right\}
  \lambda(K^{(m)};\alpha,\beta)^k,
\end{align}
which is simply a Poisson distribution with
mean~$\lambda(K^{(m)};\alpha,\beta)$.  Clearly, ${K^{(m)}=0}$ is an
absorbing state, however, the state space of the Markov chain is
countably-infinite and to know that it will reach the absorbing state with
probability one, we must know that~$K^{(m)}$ does not blow up to infinity.

\begin{figure}[t!]
  \centering%
  \subfloat[{\small Example traces with~${K^{(0)}=50}$}]{%
    \centering%
    \includegraphics[width=0.95\textwidth]{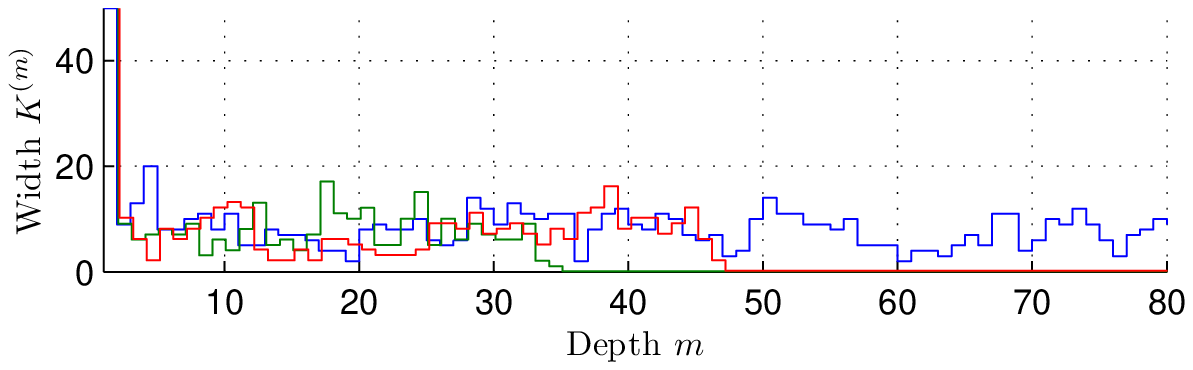}%
    \label{fig:depth-traces}%
  }\\
  \subfloat[{\small Expected~$K^{(m+1)}$}]{%
    \centering%
    \includegraphics[width=0.45\textwidth]{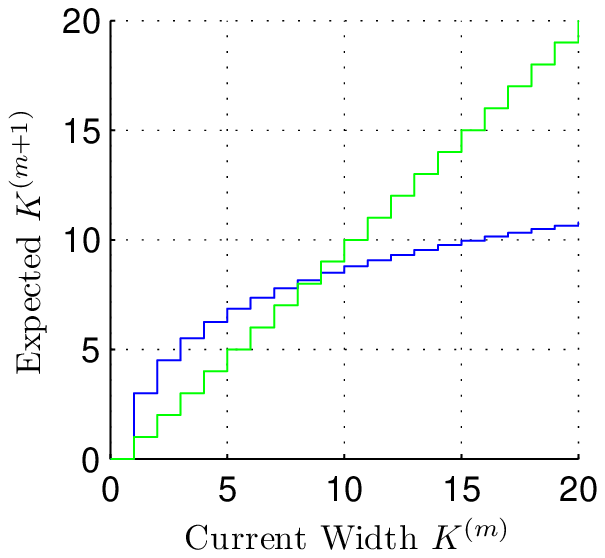}%
    \label{fig:flsc}%
  }~%
  \subfloat[{\small Drift}]{%
    \centering%
    \includegraphics[width=0.45\textwidth]{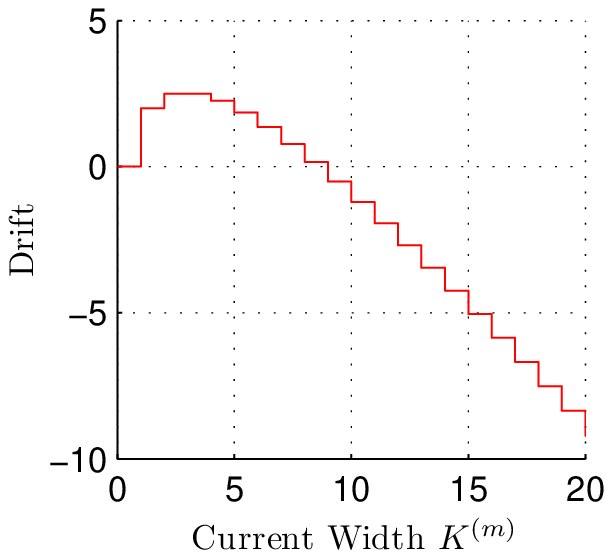}%
    \label{fig:drift}%
  }
  \caption{\small Properties of the Markov chain on layer width for the
    CIBP, with~${\alpha=3}$, ${\beta=1}$.  Note that these values are
    illustrative and are not necessarily appropriate for a network
    structure.  a)~Example traces of a Markov chain on layer width, indexed
    by depth~$m$.  b)~Expected~$K^{(m+1)}$ as a function of~$K^{(m)}$ is
    shown in blue.  The Lyapunov function~$\mcL(\cdot)$ is shown in green.
    c)~The drift as a function of the current width~$K^{(m)}$.  This
    corresponds to the difference between the two lines in (a).  Note that
    it goes negative when the layer width is greater than eight.}
\end{figure}

In such a Markov chain, this requirement is equivalent to the statement
that the chain has an equilibrium distribution when conditioned on
nonabsorption (has a \textit{quasi-stationary distribution})
\citep{seneta-verejones-1966a}.  For countably-infinite state spaces, a
Markov chain has a (quasi-) stationary distribution if it is
positive-recurrent, which is the property that there is a finite expected
time between consecutive visits to any state.  Positive recurrency can be
shown by proving the \textit{Foster--Lyapunov stability criterion}
(FLSC)~\citep{fayolle-etal-2008a}.  Taken together, satisfying the FLSC for
the Markov chain with transition probabilities given by
Eqn~\ref{eqn:markov-transitions} demonstrates that eventually the CIBP will
reach a restaurant in which the customers try no new dishes.  We do this by
showing that if~$K^{(m)}$ is large enough, the expected~$K^{(m+1)}$ is
smaller than~$K^{(m)}$.

The FLSC requires a \textit{Lyapunov
  function}~${\mcL(k):\naturals^{+}\to\reals\geq 0}$, with which we define the
\emph{drift function}:
\begin{align*}
  \expectation_{k|K^{(m)}}[ \mcL(k) - \mcL(K^{(m)})] &=
  \sum^{\infty}_{k=1} p(K^{(m+1)}=k\given K^{(m)})(\mcL(k)-\mcL(K^{(m)})).
\end{align*}
The drift is the expected change in~$\mcL(k)$.  If there is a~$K^{(m)}$
above which all drifts are negative, then the Markov chain satisfies the
FLSC and is positive-recurrent.  In the CIBP, this is satisfied
for~${\mcL(k)=k}$.  That the drift eventually becomes negative can be seen
by the fact
that
\begin{align*}
  \expectation_{k|K^{(m)}}[\mcL(k)] &= \lambda(K^{(m)}\,;\,\alpha,\beta)
\end{align*}
is~$O(\ln K^{(m)})$ and~${\expectation_{k|K^{(m)}}[\mcL(K^{(m)})]=K^{(m)}}$
is~$O(K^{(m)})$.  Figures~\ref{fig:flsc} and~\ref{fig:drift} show a
schematic of this idea.

\begin{figure}[!t]
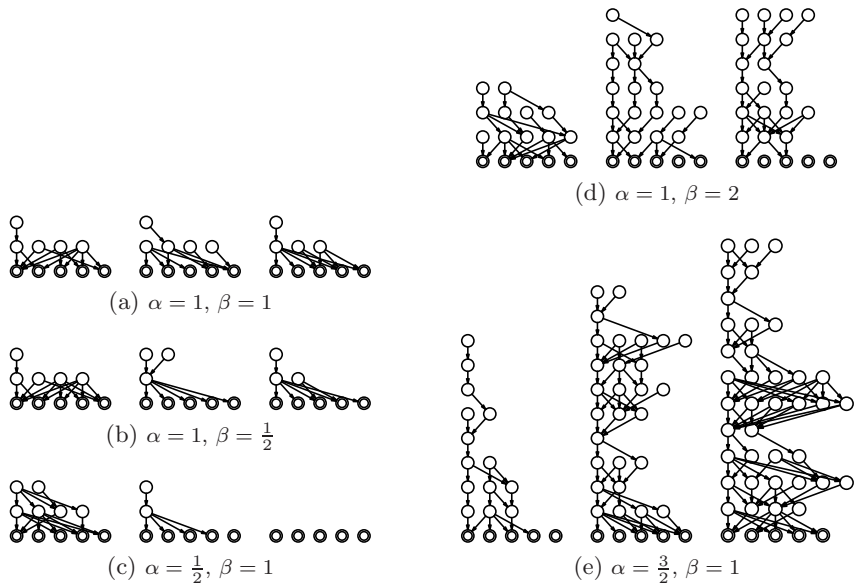

  \centering%
  \begin{minipage}[b]{0.49\textwidth}%
    \centering%
    \subfloat[\scriptsize{$\alpha=1$, $\beta=1$}]{%
      \includegraphics[width=1.4cm]{figures/samples/fig_a1.1}\quad%
      \includegraphics[width=1.4cm]{figures/samples/fig_a2.1}\quad%
      \includegraphics[width=1.4cm]{figures/samples/fig_a3.1}%
    }\\
    \subfloat[\scriptsize{$\alpha=1$, $\beta=\half$}]{%
      \includegraphics[width=1.4cm]{figures/samples/fig_d1.1}\quad%
      \includegraphics[width=1.4cm]{figures/samples/fig_d2.1}\quad%
      \includegraphics[width=1.4cm]{figures/samples/fig_d3.1}%
    }\\%
    \subfloat[\scriptsize{$\alpha=\half$, $\beta=1$}]{%
      \includegraphics[width=1.4cm]{figures/samples/fig_e1.1}\quad%
      \includegraphics[width=1.4cm]{figures/samples/fig_e2.1}\quad%
      \includegraphics[width=1.4cm]{figures/samples/fig_e3.1}%
    }%
  \end{minipage}%
  \begin{minipage}[b]{0.49\textwidth}%
    \centering%
    \subfloat[\scriptsize{$\alpha=1$, $\beta=2$}]{%
      \includegraphics[width=1.4cm]{figures/samples/fig_b1.1}\quad%
      \includegraphics[width=1.4cm]{figures/samples/fig_b2.1}\quad%
      \includegraphics[width=1.4cm]{figures/samples/fig_b3.1}%
    }\\
    \subfloat[\scriptsize{$\alpha=\frac{3}{2}$, $\beta=1$}]{%
      \includegraphics[width=1.4cm]{figures/samples/fig_c1.1}\quad%
      \includegraphics[width=1.4cm]{figures/samples/fig_c2.1}\quad%
      \includegraphics[width=1.8cm]{figures/samples/fig_c3.1}%
    }%
  \end{minipage}
  \caption{\small Samples from the CIBP-based prior on network structures,
    with five visible units.}%
  \label{fig:samples}%
\end{figure}

\subsection{The CIBP as a prior on the structure of an infinite belief
  network} The CIBP can be used as a prior on the sequence~\({\bZ^{(0)},
  \bZ^{(1)}, \bZ^{(2)}, \cdots}\) from Section~\ref{sec:finite-bns}, to
allow an infinite sequence of infinitely-wide hidden layers.  As before,
there are~$K^{(0)}$ visible units.  The edges between the first hidden
layer and the visible layer are drawn according to the restaurant metaphor.
This yields a finite number of units in the first hidden layer,
denoted~$K^{(1)}$ as before.  These units are now treated as the visible
units in another IBP-based network.  While this recurses infinitely deep,
only a finite number of units are ancestors of the visible units.
Figure~\ref{fig:samples} shows several samples from the prior for different
parameterizations.  Only connected units are shown in the figure.

The parameters~$\alpha$ and~$\beta$ govern the expected width and sparsity
of the network at each level. The expected in-degree of each unit (number
of parents) is~$\alpha$ and the expected out-degree (number of children)
is ${K/\sum^K_{k=1}\frac{\beta}{\beta+k-1}}$, for~$K$ units used in the
layer below.  For clarity, we have presented the CIBP results with~$\alpha$
and~$\beta$ fixed at all depths; however, this may be overly restrictive.
For example, in an image recognition problem we would not expect the
sparsity of edges mapping low-level features to pixels to be the same as
that for high-level features to low-level features.  To address this, we
allow~$\alpha$ and~$\beta$ to vary with depth, writing~$\alpha^{(m)}$
and~$\beta^{(m)}$.  The CIBP terminates with probability one as long as
there exists some finite upper bound for~$\alpha^{(m)}$ and~$\beta^{(m)}$
for all~$m$.

\subsection{Priors on other parameters}
Other parameters in the model also require prior distributions and we use
these priors to tie parameters together according to layer.  We assume that
the weights in layer~$m$ are drawn independently from Gaussian
distributions with mean~$\mu^{(m)}_w$ and precision~$\rho^{(m)}_m$.  We
assume a similar layer-wise prior for biases with
parameters~$\mu^{(m)}_\gamma$ and~$\rho^{(m)}_\gamma$.  We use layer-wise
gamma priors on the~$\nu^{(m)}_k$, with parameters~$a^{(m)}$ and~$b^{(m)}$.
We tie these prior parameters together with global normal-gamma hyperpriors
for the weight and bias parameters, and gamma hyperpriors for the precision
parameters.

\section{Inference}
We have so far described a prior on belief network structure and
parameters, along with likelihood functions for unit activation.  The
inference task in this model is to find the posterior distribution
over the structure and the parameters of the network, having
seen~$N$~$K^{(0)}$-dimensional vectors~${\{\bx_n \in
  (-1,1)^{K^{(0)}}\}^N_{n=1}}$.  This posterior distribution is
complex, so we use Markov chain Monte Carlo (MCMC) to draw samples
from~${p( \{ \bZ^{(m)}, \bW^{(m)}\}^{\infty}_{m}, \{\bgamma^{(m)},
  \bnu^{(m)}\}^{\infty}_{m}, \{\bx_n\}^N_{n} )}$, which, for
fixed~\(\{\bx_n\}^N_{n}\), is proportional to the posterior
distribution.  This joint distribution requires marginalizing over the
states of the hidden units that led to each of the~$N$ observations.
The values of these hidden units are
denoted~$\{\{\bu^{(m)}_n\}^{\infty}_{m=1}\}^N_{n=1}$, and we augment
the Markov chain to include these as well.

In general, one would not expect that a distribution on infinite networks
would yield tractable inference.  However, in our construction, conditioned
on the sequence~\({\bZ^{(1)}, \bZ^{(2)},\cdots}\), almost all of the
infinite number of units are independent.  Due to this independence, they
trivially marginalize out of the model's joint distribution and we can
restrict inference only to those units that are ancestors of the visible
units.  Of course, since this trivial marginalization only arises from
the~$\bZ^{(m)}$ matrices, we must also have a distribution on infinite
binary matrices that allows exact marginalization of all the uninstantiated
edges.  The row-wise and column-wise exchangeability properties of the IBP
are what allows the use of infinite matrices.  The bottom-up conditional
structure of the CIBP allows an infinite number of these matrices.

To simplify notation, we will use~$\bOmega$ for the aggregated state of the
model variables, i.e.~${\bOmega\!=\!(\{ \bZ^{(m)}, \bW^{(m)},
  \{\bu^{(m)}_n\}^N_{n=1}\}^{\infty}_{m=1}, \{\bgamma^{(m)},
  \bnu^{(m)}\}^{\infty}_{m=0}, \{\bx_n\}^N_{n=1})}$.  Given the
  hyperparameters, we can then write the joint distribution
as
\begin{multline}
  p(\bOmega) =
  \left(
  p(\bgamma^{(0)})\,
  p(\bnu^{(0)})
  \prod^{K^{(0)}}_{k=1}
  \prod^{N}_{n=1}
  p(x_{k,n}\given y^{(0)}_{k,n}, \nu^{(0)}_k)\right)\\
\times
\left(
  \prod^{\infty}_{m=1}
  p(\bW^{(m)})\,
  p(\bgamma^{(m)})\,
  p(\bnu^{(m)})
  \prod^{K^{(m)}}_{k=1}
  \prod^{N}_{n=1}
  p(u^{(m)}_{k,n}\given y^{(m)}_{k,n}, \nu^{(m)}_k)\right).
\end{multline}
Although this distribution involves several infinite sets, it is possible
to sample from the relevant parts of the posterior.  We do this by MCMC,
updating part of the model, while conditioning on the rest.  In particular,
conditioning on the binary matrices~${\{\bZ^{(m)}\}^{\infty}_{m=1}}$, which
define the structure of the network, inference becomes exactly as it would
be in a finite belief network.

\subsection{Sampling from the hidden unit states}
Since we cannot easily integrate out the hidden units, it is necessary to
explicitly represent them and sample from them as part of the Markov chain.
As we are conditioning on the network structure, it is only necessary to
sample the units that are ancestors of the visible units.
\citet{frey-1997a} proposed a slice sampling scheme for the hidden unit
states but we have been more successful with a specialized
independence-chain variant of multiple-try
Metropolis--Hastings~\citep{liu-etal-2000a}.  Our method proposes several
($\approx 5$) possible new unit states from the activation distribution and
selects from among them (or rejects them all) according to the likelihood
imposed by its children.  As this operation can be executed in parallel by
tools such as Matlab, we have seen significantly better mixing
performance by wall-clock time than the slice sampler.

\subsection{Sampling from the weights and biases}
Given that a directed edge exists, we sample the posterior distribution
over its weight.  Conditioning on the rest of the model, the NLGBN results
in a convenient Gaussian form for the distribution on weights so that we
can Gibbs update them using a Gaussian with parameters
\begin{align}
  \mu^{\sf{w-post}}_{m,k,k'}  &\!=\! \frac{\rho^{(m)}_w\!\mu^{(m)}_w 
    \!\!+\! \nu^{(m\!-\!1)}_k\! \sum_{n}\!\! u^{(m)}_{n,k'}
    (\sigma^{-\!1}(u^{(m\!-\!1)}_{k})\!-\!\xi^{(m)}_{n,k,k'})}{
  \rho^{(m)}_w+\nu^{(m-1)}_k\sum_{n}(u^{(m)}_{n,k'})^2} \\
  \rho^{\sf{w-post}}_{m,k,k'} &\!=\! 
  \rho^{(m)}_w+\nu^{(m-1)}_k\sum_{n}(u^{(m)}_{n,k'})^2,
\end{align}
where
\begin{align}
  \xi^{(m)}_{n,k,k'} &= \gamma^{(m-1)}_{k} 
  + \sum_{k''\neq k'}Z^{(m)}_{k,k''}W^{(m)}_{k,k''}u^{(m)}_{n,k''}.
\end{align}
The bias~$\gamma^{(m)}_k$ can be similarly sampled from a Gaussian
distribution with parameters
\begin{align}
  \mu^{\gamma\sf{-post}}_{m,k}  &=
  \frac{
    \rho^{(m)}_\gamma\mu^{(m)}_\gamma
    + \nu^{(m)}_k\sum^N_{n=1}(\sigma^{-1}(u^{(m)}_{n,k})-\chi^{(m)}_{n,k})
  }{
    \rho^{(m)}_{\gamma} + N\nu^{(m)}_k
  }\\
  \rho^{\gamma\sf{-post}}_{m,k} &= \rho^{(m)}_{\gamma} + N\nu^{(m)}_k
\end{align}
where
\begin{align}
\chi^{(m)}_{n,k} &= \sum^{K^{(m+1)}}_{k'=1}Z^{(m+1)}_{k,k'}
W^{(m+1)}_{k,k'}u^{(m+1)}_{n,k'}.
\end{align}

\subsection{Sampling from the activation variances}
We use the NLGBN model to gain the ability to change the mode of unit
behaviors between discrete and continuous representations.  This
corresponds to sampling from the posterior distributions over
the~$\nu^{(m)}_k$.  With a conjugate prior, the new value can be sampled
from a gamma distribution with parameters
\begin{align}
  a^{\nu\sf{-post}}_{m,k} &= a^{(m)}_\nu + N/2\\
  b^{\nu\sf{-post}}_{m,k} &= b^{(m)}_\nu +
  \half\sum^N_{n=1}(\sigma^{-1}(u^{(m)}_{n,k})-y^{(m)}_{k})^2.
\end{align}

\subsection{Sampling from the structure}
A model for infinite belief networks is only useful if it is possible to
perform inference.  The appeal of the CIBP prior is that it enables
construction of a tractable Markov chain for inference.  To do this
sampling, we must add and remove edges from the network, consistent with
the posterior equilibrium distribution.  When adding a layer, we must
sample additional layerwise model components.  When introducing an edge, we
must draw a weight for it from the prior.  If this new edge introduces a
previously-unseen hidden unit, we must draw a bias for it and also draw its
deeper cascading connections from the prior.  Finally, we must sample
the~$N$ new hidden unit states from any new unit we introduce. 

We iterate over each layer that connects to the visible units.  Within each
layer~${m \geq 0}$, we iterate over the connected units.  Sampling the
edges incident to the~$k$th unit in layer~$m$ has two phases.  First, we
iterate over each connected unit in layer~${m+1}$, indexed by~$k'$.  We
calculate~$\eta^{(m)}_{-k,k'}$, which is the number of nonzero entries in
the~$k'$th column of~$\bZ^{(m+1)}$, excluding any entry in the~$k$th row.
If~$\eta^{(m)}_{-k,k'}$ is zero, we call the unit~$k'$ a \textit{singleton}
parent, to be dealt with in the second phase.  If~$\eta^{(m)}_{-k,k'}$ is
nonzero, we introduce (or keep) the edge from unit~$u^{(m+1)}_{k'}$
to~$u^{(m)}_k$ with Bernoulli probability
\begin{multline*}
  p(Z^{(m+1)}_{k,k'} = 1\given\bOmega\backslash Z^{(m+1)}_{k,k'})
  = \frac{1}{\mcZ}
  \left(\!\frac{\eta^{(m)}_{-k,k'}}{K^{(m)} \!+\! \beta^{(m)} \!-\! 1}\!\right)\\
  \times \prod^N_{n=1}
  p(u^{(m)}_{n,k}\given Z^{(m+1)}_{k,k'}=1, \bOmega\backslash
  Z^{(m)}_{k,k'})
\end{multline*}
\begin{multline*}
  p(Z^{(m+1)}_{k,k'} =0\given\bOmega\backslash Z^{(m+1)}_{k,k'})
  = \frac{1}{\mcZ}
  \left(\!1\!-\!\frac{\eta^{(m)}_{-k,k'}}{K^{(m)} \!+\! \beta^{(m)} \!-\! 1}\!\right)\\
  \times \prod^N_{n=1}
  p(u^{(m)}_{n,k}\given Z^{(m+1)}_{k,k'}=0, \bOmega\backslash Z^{(m+1)}_{k,k'}),
\end{multline*}
where~$\mcZ$ is the appropriate normalization constant.

In the second phase, we consider deleting connections to singleton parents
of unit~$k$, or adding new singleton parents.  We do this via a
Metropolis--Hastings operator using a birth/death process.  If there are
currently~$K_{\circ}$ singleton parents, then with probability~$1/2$ we
propose adding a new one by drawing it recursively from deeper layers, as
above.  We accept the proposal to insert a connection to this new parent
unit with M--H acceptance ratio
\begin{align*}
  a_{\sf{mh-insert}} &= \frac{\alpha^{(m)}\beta^{(m)}}{ (K_\circ\!+\!1)^2(\beta^{(m)}\! +\! K^{(m)}\! -\!1)}
  \prod^N_{n=1}\frac{
    p(u^{(m)}_{n,k}\given Z^{(m+1)}_{k,j}\!\!=\!1,\bOmega\backslash Z^{(m+1)}_{k,j})
    }{
    p(u^{(m)}_{n,k}\given Z^{(m+1)}_{k,j}\!\!=\!0,\bOmega\backslash Z^{(m+1)}_{k,j})
    }.
\end{align*}
If we do not propose to insert a unit and~${K_{\circ}\geq 0}$, then with
probability~$1/2$ we select uniformly from among the singleton parents of
unit~$k$ and propose removing the connection to it.  We accept the proposal
to remove the~$j$th one with M--H acceptance ratio
\begin{align*}
  a_{\sf{mh-remove}} &= \frac{K_\circ^2(\beta^{(m)}\! +\! K^{(m)}\! -\!1)}{\alpha^{(m)}\beta^{(m)} }
  \prod^N_{n=1}\frac{
    p(u^{(m)}_{n,k}\given Z^{(m+1)}_{k,j}\!\!=\!0,\bOmega\backslash Z^{(m+1)}_{k,j})
    }{
    p(u^{(m)}_{n,k}\given Z^{(m+1)}_{k,j}\!\!=\!1,\bOmega\backslash Z^{(m+1)}_{k,j})
    }.
\end{align*}
After these phases, chains of units that are not ancestors of the visible
units can be discarded.  Notably, this birth/death operator samples from
the IBP posterior with a non-truncated equilibrium distribution, even
without conjugacy.  Unlike the stick-breaking approach of
\citet{teh-etal-2007a}, it allows use of the two-parameter IBP, which is
important to this model.

\subsection{Sampling From CIBP Hyperparameters}
When applying this model to data, it is infrequently the case that we
would have a good \textit{a priori} idea of what the appropriate IBP
parameters should be.  These control the width and sparsity of the
network and while we might have good initial guesses for the lowest
layer, in general we would like to
infer~$\{\alpha^{(m)},\beta^{(m)}\}$ as part of the larger inference
procedure.  This is straightforward in the fully-Bayesian MCMC
procedure we have constructed, and it does not differ markedly from
hyperparameter inference in standard IBP models when conditioning
on~$\bZ^{(m)}$.  As in some other nonparametric models
(e.g. \citet{tokdar-2006a} and \citet{rasmussen-williams-2006a}), we
have found that light-tailed priors on the hyperparameters helps
ensure that the model stays in reasonable states.

\begin{figure}[t!]
  \centering%
  \subfloat[]{%
    \includegraphics[height=5cm]{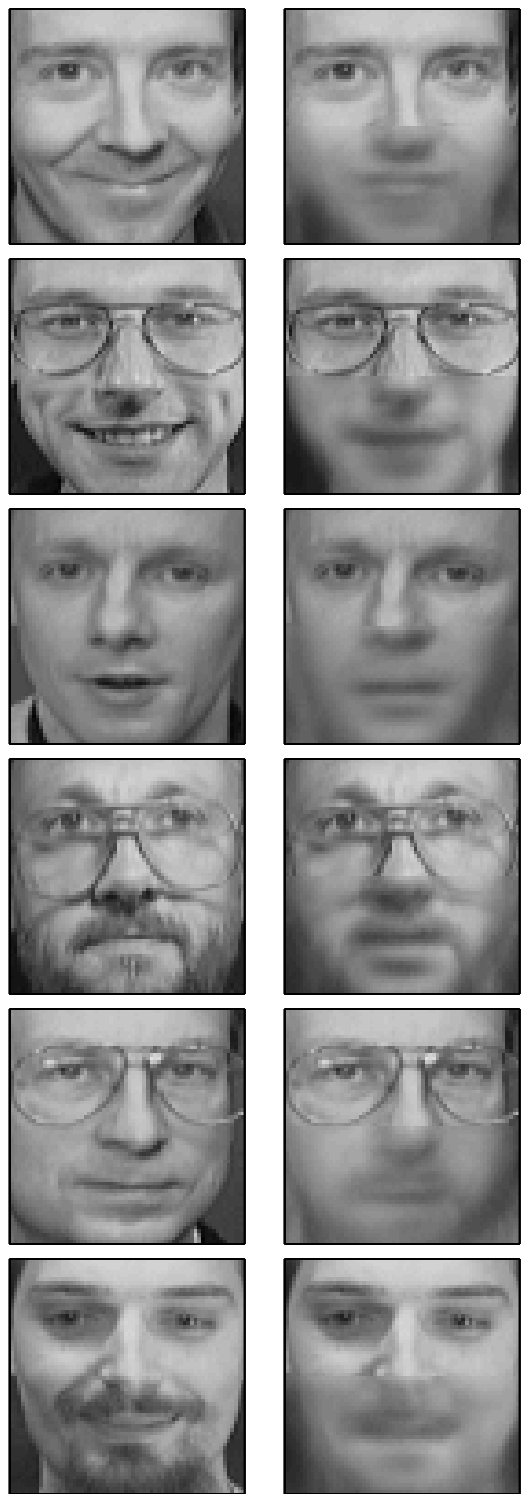}%
    \label{fig:olivetti-recons}%
  }~\quad~%
  \subfloat[]{%
    \includegraphics[height=5cm]{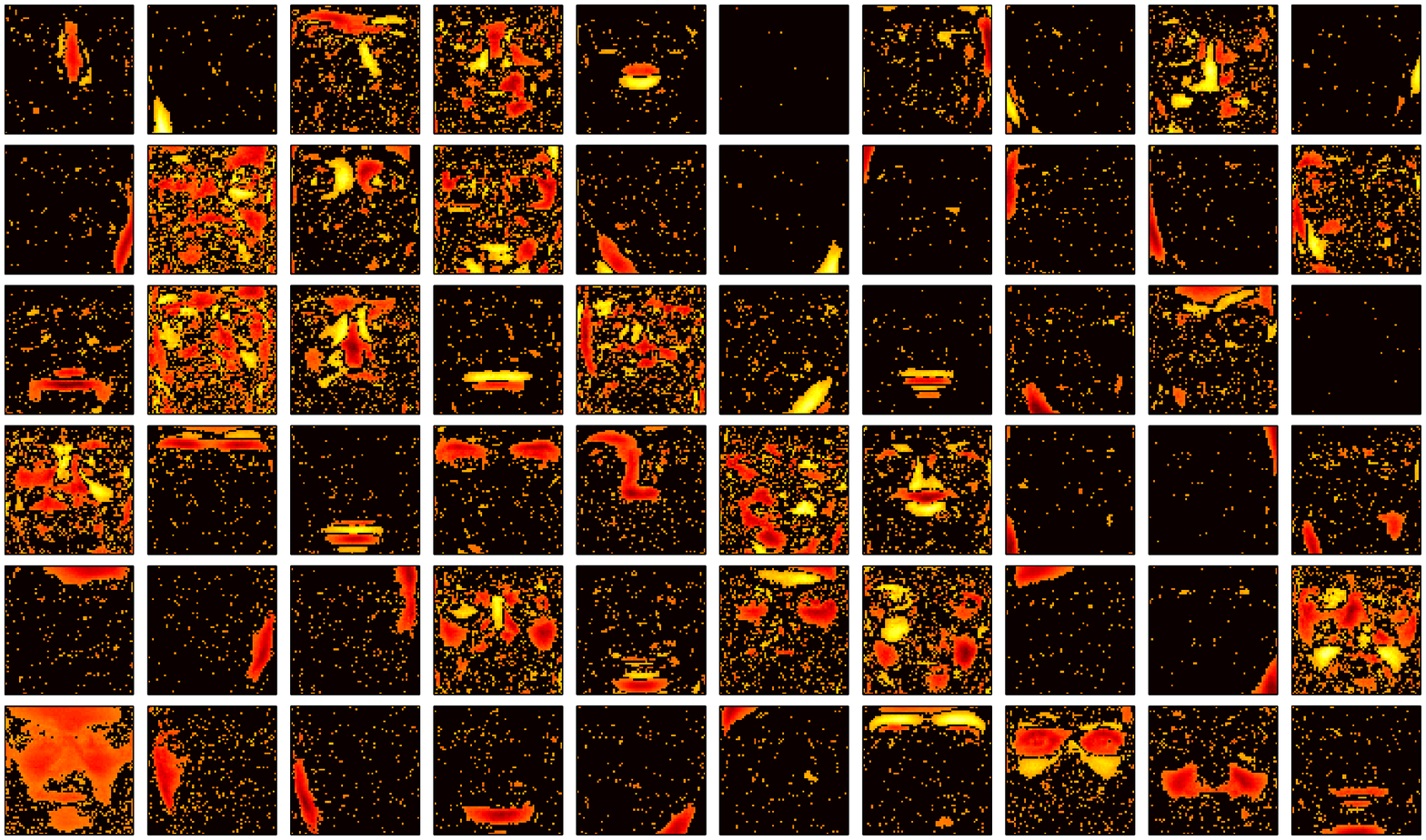}%
    \label{fig:olivetti-weights}%
  }\\%
  \subfloat[]{%
    \includegraphics[height=5cm]{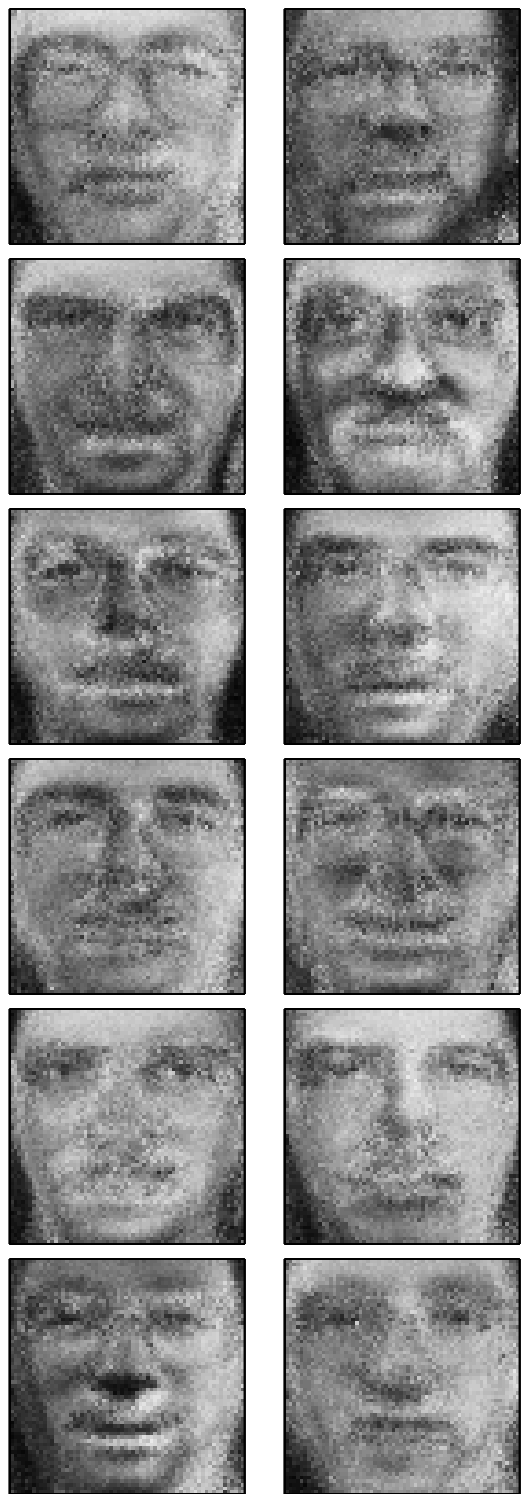}%
    \label{fig:olivetti-fantasies}%
  }~\quad~%
  \subfloat[]{%
    \includegraphics[height=5cm]{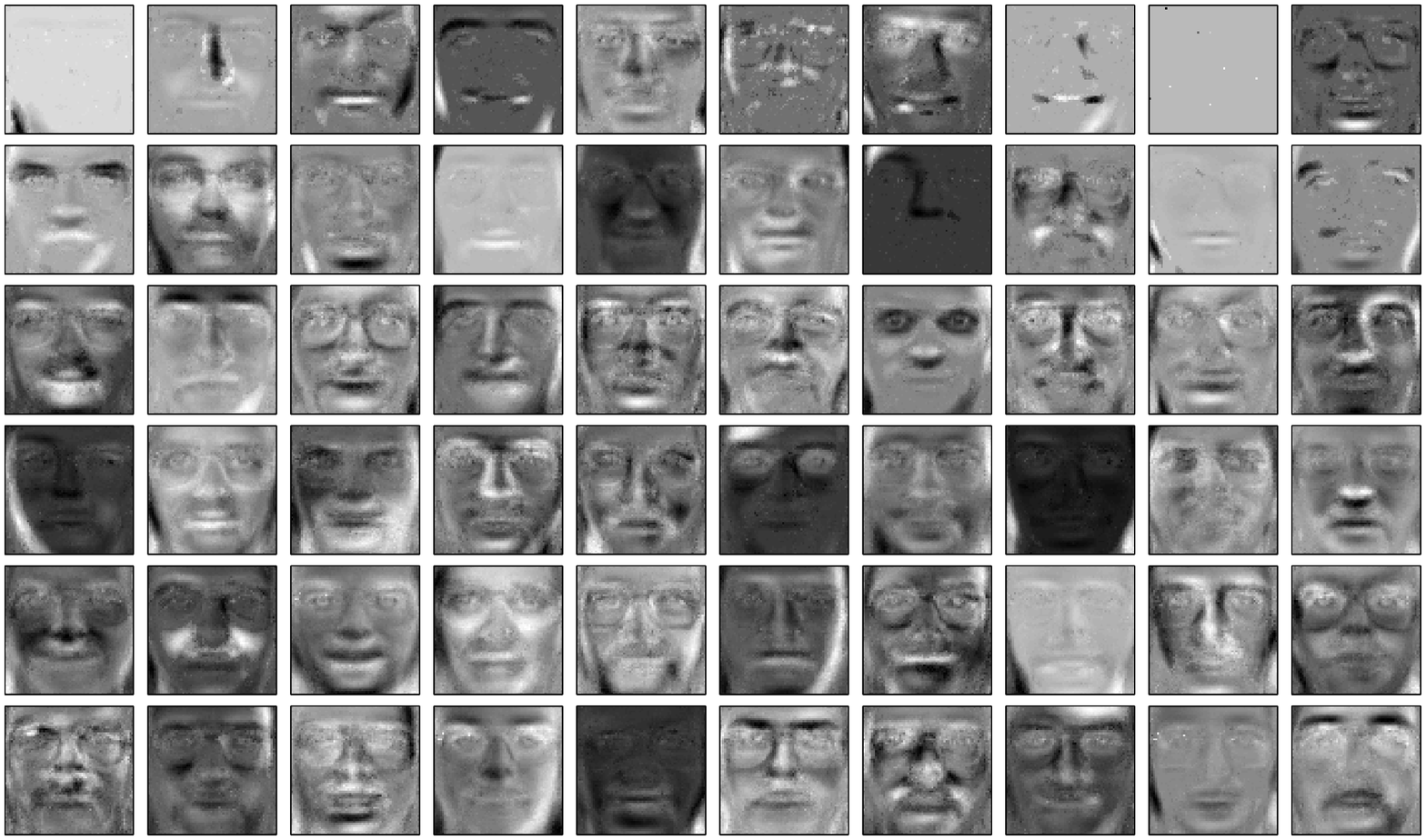}%
    \label{fig:olivetti-features}%
  }%
  \caption{\small Olivetti faces a)~Test images on the left, with
    reconstructed bottom halves on the right.  b)~Sixty features learned in
    the bottom layer, where black shows absence of an edge.  Note the
    learning of sparse features corresponding to specific facial structures
    such as mouth shapes, noses and eyebrows. c)~Raw predictive fantasies.
    d)~Feature activations from individual units in the second hidden
    layer.}
  \label{fig:olivetti}
\end{figure}

\section{Reconstructing Images}
We applied the model and MCMC-based inference procedure to three image data
sets: the Olivetti faces, the MNIST digits and the Frey faces.  We used
these data to analyze the structures and sparsity that arise in the model
posterior.  To get a sense of the utility of the model, we constructed a
missing-data problem using held-out images from each set.  We removed the
bottom halves of the test images and asked the model to reconstruct the
missing data, conditioned on the top half.  The prediction itself
was done by integrating out the parameters and structure via MCMC.

\paragraph*{Olivetti Faces}
\label{sec:olivetti}
The Olivetti faces data~\citep{samaria-harter-1994a} consists of
400~${64\times 64}$ grayscale images of the faces of 40 distinct subjects.
We divided these into 350 test data and 50 training data, selected
randomly.  This data set is an appealing test because it has few examples,
but many dimensions.  Figure~\ref{fig:olivetti-recons} shows six
bottom-half test set reconstructions on the right, compared to the ground
truth on the left.  Figure~\ref{fig:olivetti-weights} shows a subset of
sixty weight patterns from a posterior sample of the structure, with black
indicating that no edge is present from that hidden unit to the visible
unit (pixel).  The algorithm is clearly assigning hidden units to specific
and interpretable features, such as mouth shapes, the presence of glasses
or facial hair, and skin tone, while largely ignoring the rest of the
image.  Figure~\ref{fig:olivetti-fantasies} shows ten pure fantasies from
the model, easily generated in a directed acyclic belief network.
Figure~\ref{fig:olivetti-features} shows the result of activating
individual units in the second hidden layer, while keeping the rest
unactivated, and propagating the activations down to the visible pixels.
This provides an idea of the image space spanned by the principal
components of these deeper units.  A typical posterior network had three
hidden layers, with about 70 units in each layer.

\begin{figure}[t!]
  \centering%
  \subfloat[]{%
    \includegraphics[height=6cm]{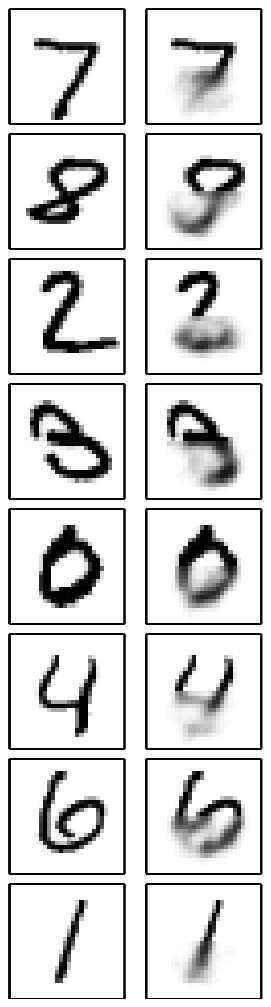}%
    \label{fig:mnist-recons}%
  }~%
  \subfloat[]{%
    \includegraphics[height=6cm]{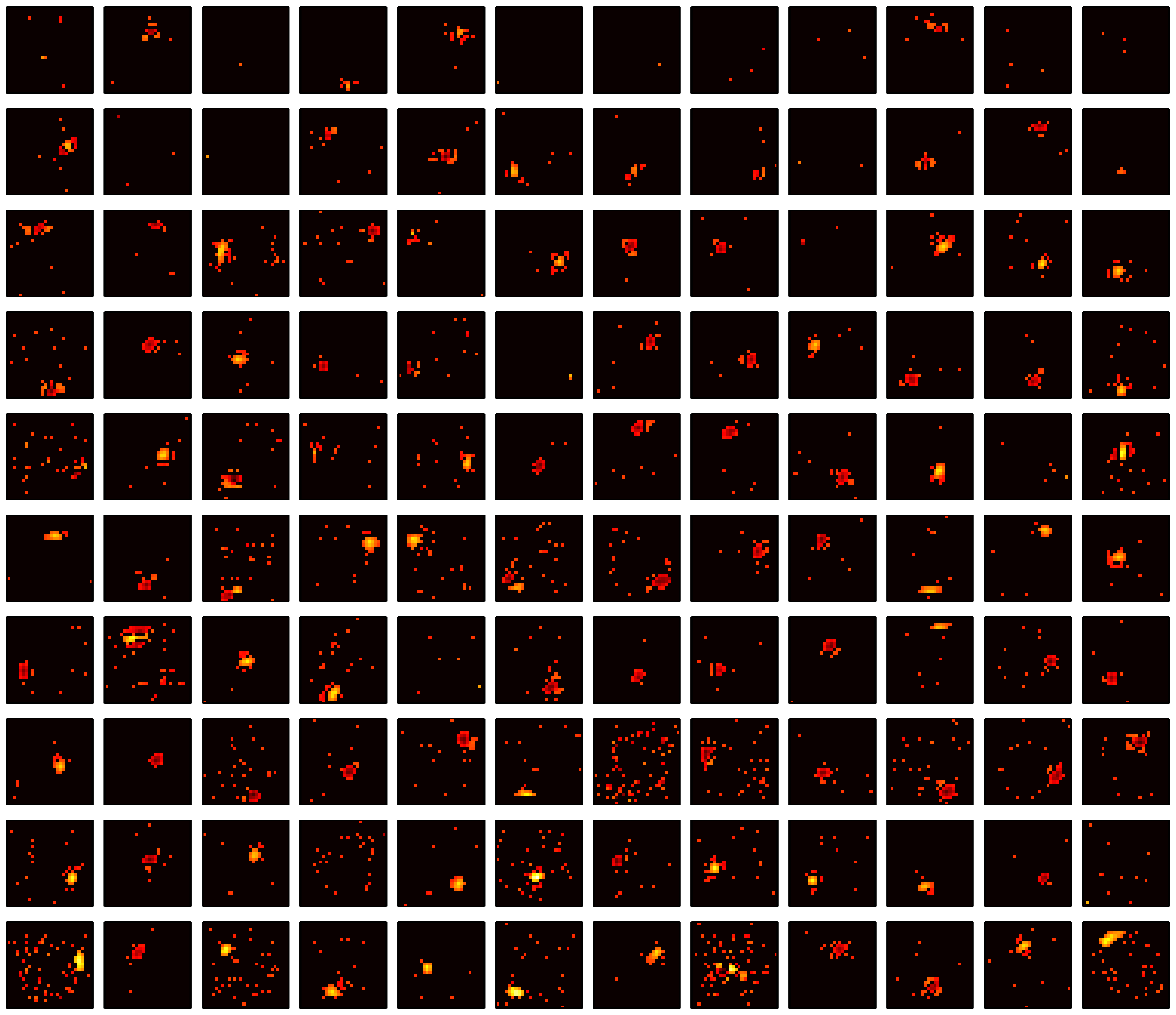}%
    \label{fig:mnist-weights}%
  }~%
  \subfloat[]{%
    \includegraphics[height=6cm]{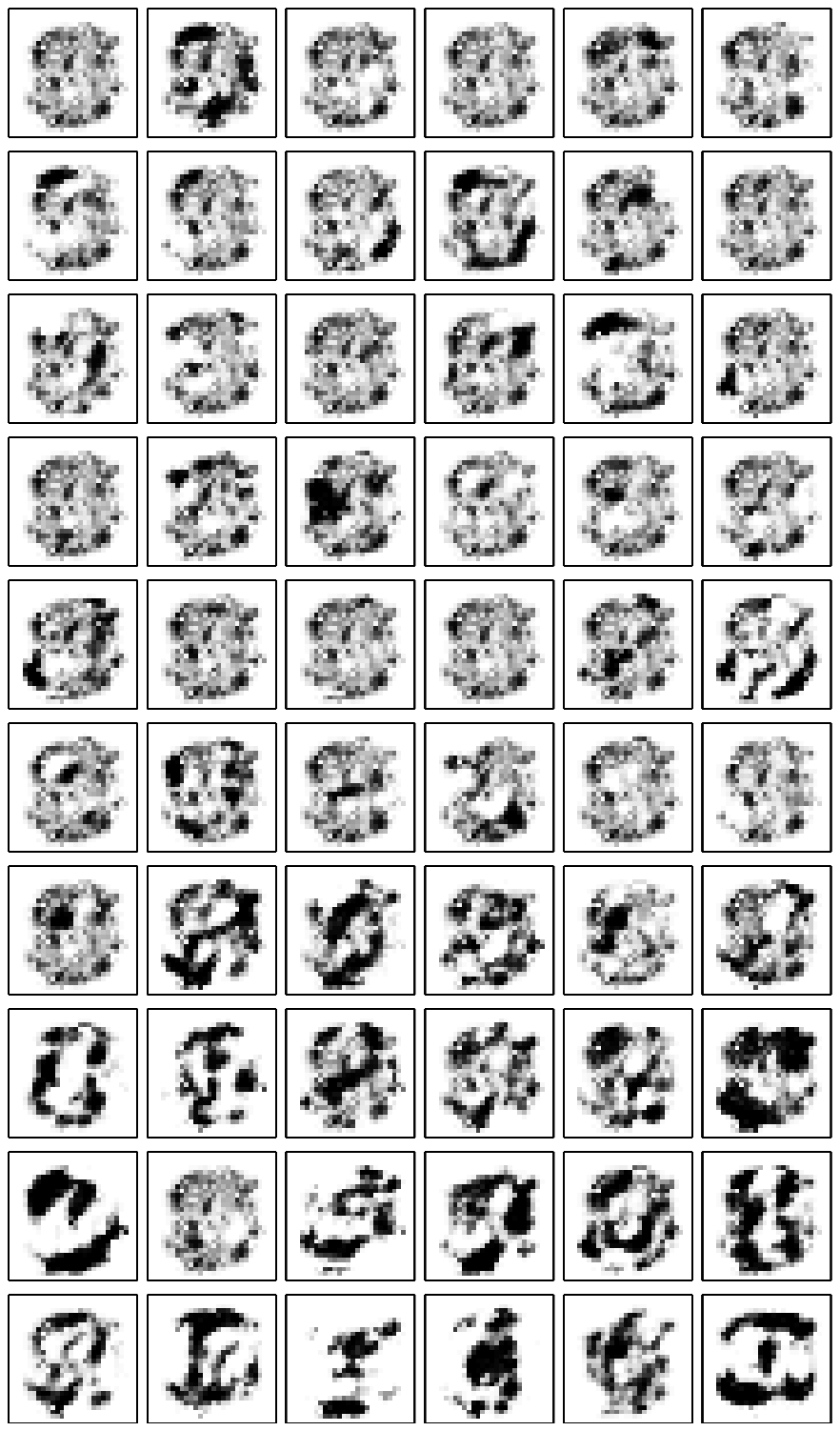}%
    \label{fig:mnist-features}%
  }\\%
  \subfloat[]{%
    \includegraphics[height=8cm,angle=90]{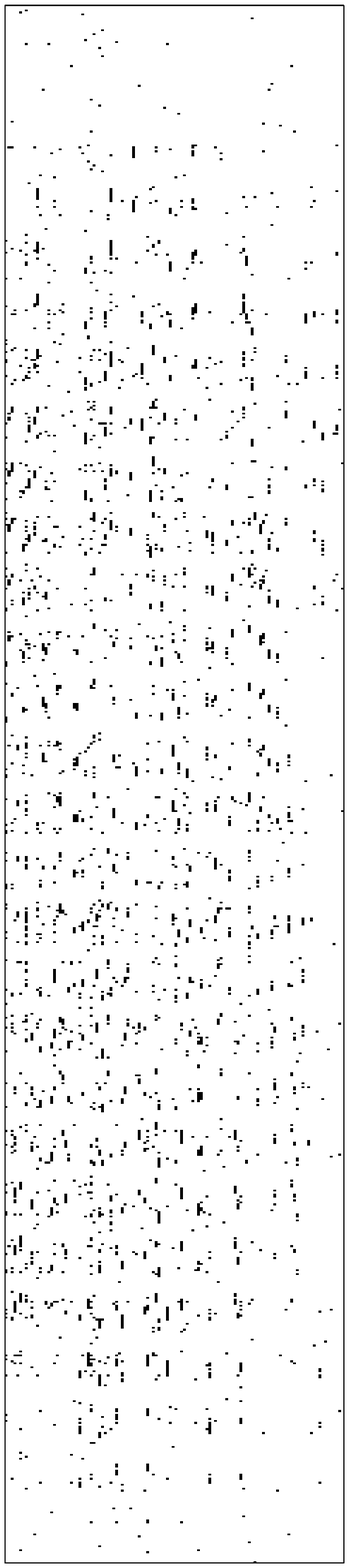}~%
    \includegraphics[height=1.15cm,angle=90]{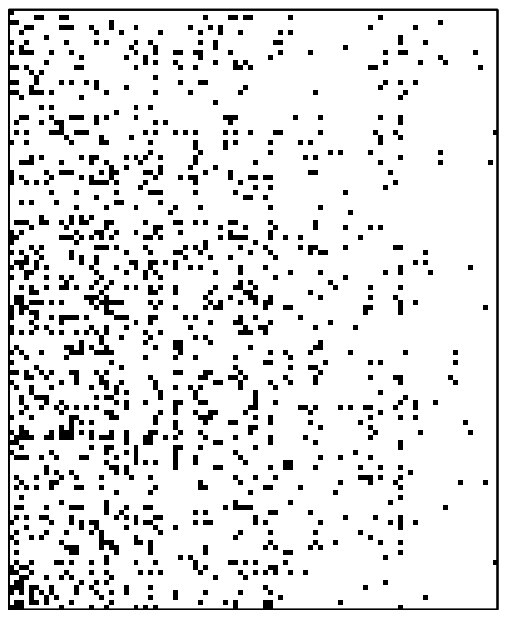}~%
    \includegraphics[height=0.65cm,angle=90]{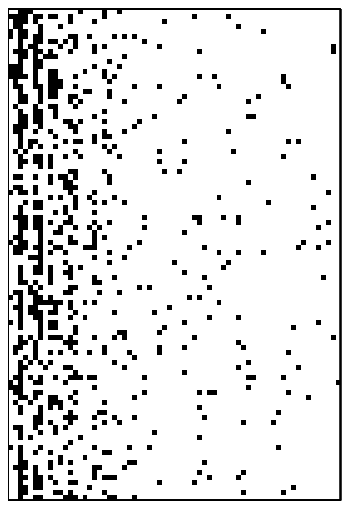}%
    \label{fig:mnist-matrices}%
  }
  \caption{\small MNIST Digits a)~Eight pairs of test reconstructions, with the
    bottom half of each digit missing.  The truth is the left image in each
    pair.  b)~120 features learned in the bottom layer, where black
    indicates that no edge exists.  c)~Activations in pixel space resulting
    from activating individual units in the deepest layer.  d)~Samples from
    the posterior of~$\bZ^{(0)}$,~$\bZ^{(1)}$ and~$\bZ^{(2)}$ (transposed).}
  \label{fig:mnist}
\end{figure}

\paragraph*{MNIST Digit Data}
We used a subset of the MNIST handwritten digit
data~\citep{lecun-etal-1998a} for training, consisting of 50~${28 \times
  28}$ examples of each of the ten digits.  We used an additional ten
examples of each digit for test data.  In this case, the lower-level
features are extremely sparse, as shown in Figure~\ref{fig:mnist-weights},
and the deeper units are simply activating sets of blobs at the pixel
level.  This is shown also by activating individual units at the deepest
layer, as shown in Figure~\ref{fig:mnist-features}.  Test reconstructions
are shown in Figure~\ref{fig:mnist-recons}.  A typical network had three
hidden layers, with approximately 120 in the first, 100 in the second and
70 in the third.  The binary matrices~$\bZ^{(0)}$,~$\bZ^{(1)}$,
and~$\bZ^{(2)}$ are shown in Figure~\ref{fig:mnist-matrices}.

\paragraph*{Frey Faces}
The Frey faces
data\footnote{\url{http://www.cs.toronto.edu/~roweis/data.html}} are
1965~${20\times 28}$ grayscale video frames of a single face with different
expressions.  We divided these into 1865 training data and 100 test data,
selected randomly.  While typical posterior samples of the network again
typically used three hidden layers, the networks for these data tended to
be much wider and more densely connected.  In the bottom layer, as shown in
Figure~\ref{fig:frey-weights}, a typical hidden unit would connect to many
pixels.  We attribute this to global correlation effects from every image
only coming from a single person.  Typical widths were 260 units, 120 units
in the second hidden layer, and 35 units in the deepest layer.

\begin{figure}[t!]
  \centering%
  \subfloat[]{%
    \includegraphics[height=8cm]{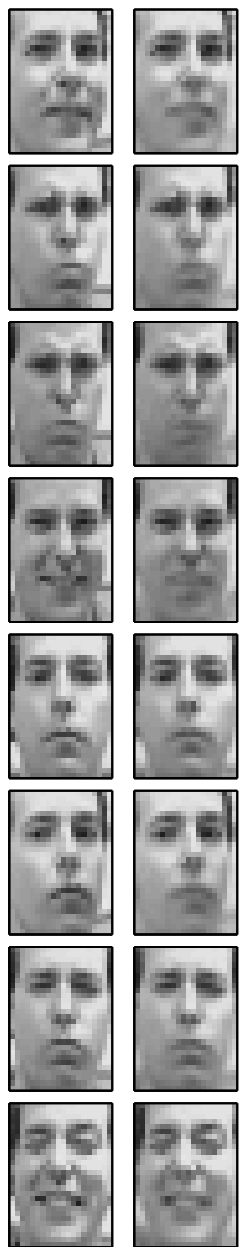}%
    \label{fig:frey-recons}%
  }~%
  \subfloat[]{%
    \includegraphics[height=8cm]{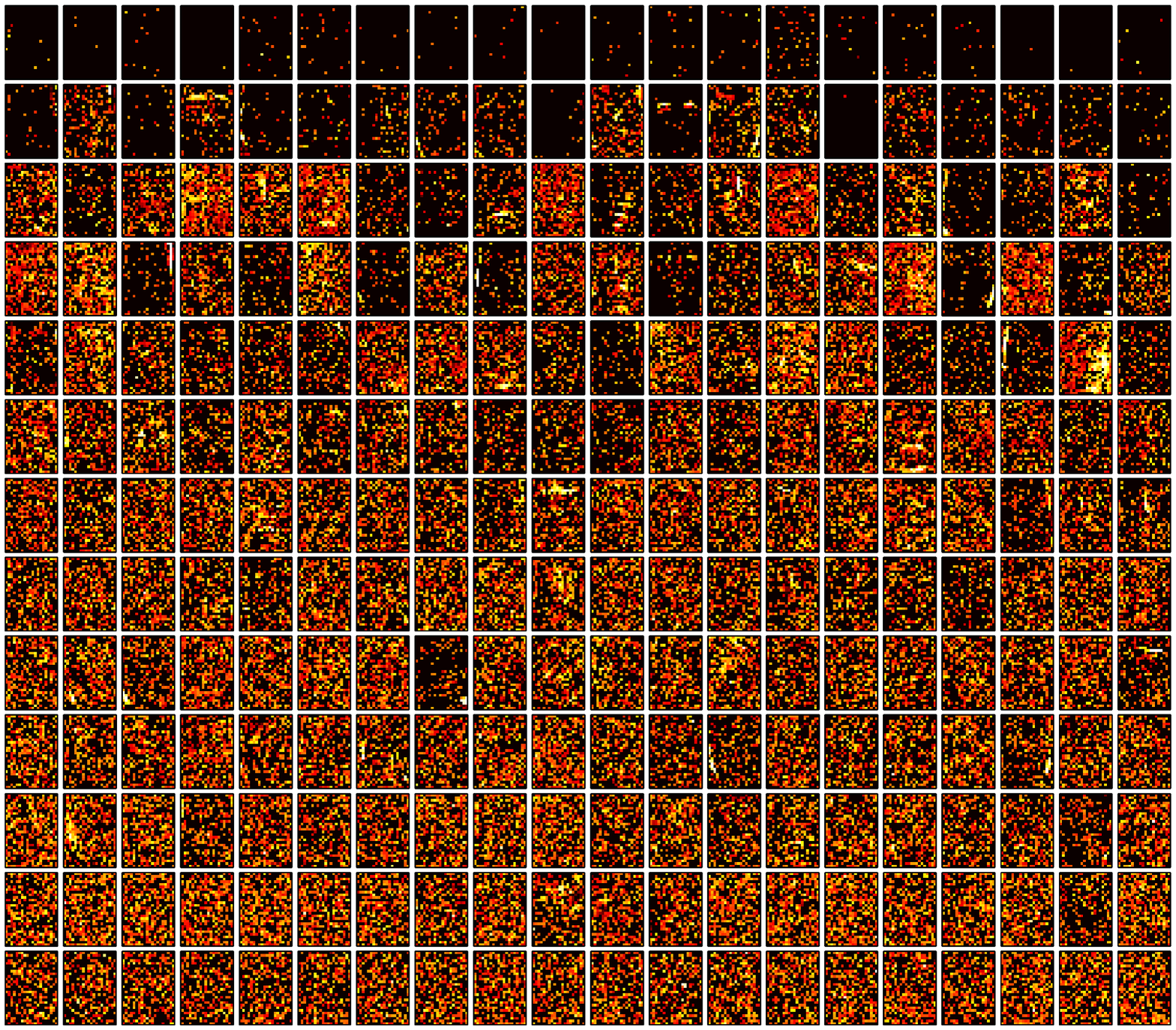}%
    \label{fig:frey-weights}%
  }%
  \caption{\small Frey faces a)~Eight pairs of test reconstructions, with the
    bottom half of each face missing.  The truth is the left image in each
    pair.  b)~260 features learned in the bottom layer, where black
    indicates that no edge exists.}
  \label{fig:freyface}
\end{figure}


In all three experiments, our MCMC sampler appeared to mix well and begins
to find reasonable reconstructions after a few hours of CPU time. It is
interesting to note that the learned sparse connection patterns
in~$\bZ^{(0)}$ varied from local (MNIST), through intermediate (Olivetti)
to global (Frey), despite identical hyperpriors on the IBP parameters.
This strongly suggests that flexible priors on structures are needed to
adequately capture the statistics of different data sets.

\section{Discussion}
This paper unites two areas of research---nonparametric Bayesian
methods and deep belief networks---to provide a novel nonparametric
perspective on the general problem of learning the structure of
directed deep belief networks with hidden units.

We addressed three outstanding issues that surround deep belief
networks. First, we allowed the units to have different operating
regimes and infer appropriate local representations that range from
discrete binary behavior to nonlinear continuous behavior. Second, we
provided a way for a deep belief network to contain an arbitrary
number of hidden units arranged in an arbitrary number of layers. This
structure enables the hidden units to have nontrivial joint
distributions. Third, we presented a method for inferring the
appropriate directed graph structure of deep belief network. To
address these issues, we introduced a novel \textit{cascading}
extension to the Indian buffet process---the cascading Indian buffet
process (CIBP)---and proved convergence properties that make it useful
as a Bayesian prior distribution for a sequence of infinite binary
matrices.

This work can be viewed as an infinite multilayer generalization of
the density network~\citep{mackay-1995a}, and also as part of a more
general literature of learning structure in probabilistic networks.
With a few exceptions
(e.g.,\ \citet{ramachandran-mooney-1998a,friedman-1998a, elidan-etal-2000a,beal-ghahramani-2006}),
most previous work on learning the structure of belief networks has
focused on the case where all units are observed
\citep{buntine-1991a,heckerman-etal-1995a,friedman-koller-2003a,koivisto-sood-2004a}.
The framework presented in this paper not only allows for an unbounded
number of hidden units, but fundamentally couples the model for the
number and behavior of the units with the nonparametric model for the
structure of the infinite directed graph. Rather than comparing
structures by evaluating marginal likelihoods of different models, our
nonparametric approach makes it possible to do inference in a single
model with an unbounded number of units and layers, thereby learning
effective model complexity. This approach is more appealing both
computationally and philosophically.

There are a variety of future research paths that can potentially stem
from the model we have presented here.  As we have presented it, we do
not expect that our MCMC-based unsupervised inference scheme will be
competitive on supervised tasks with extensively-tuned discriminative
models based on variants of maximum-likelihood learning.  However, we
believe that this model can inform choices for network depth, layer
size and edge structure in such networks and will inspire further
research into flexible nonparametric network models.


\subsection*{Acknowledgements}
The authors wish to thank Brendan Frey, David MacKay, Iain Murray and
Radford Neal for valuable discussions.  RPA is funded by the Canadian
Institute for Advanced Research.

\bibliographystyle{abbrvnat}
\bibliography{draft}

\appendix
\section{Proof of General CIBP Termination}
\label{sec:appendix}
In the main paper, we discussed that the cascading Indian buffet process
for fixed and finite~$\alpha$ and~$\beta$ eventually reaches a restaurant
in which the customers choose no dishes.  Every deeper restaurant also has
no dishes.  Here we show a more general result, for IBP parameters that
vary with depth, written~$\alpha^{(m)}$ and~$\beta^{(m)}$.

Let there be an inhomogeneous Markov chain~$\mcM$ with state
space~$\naturals$.  Let~$m$ index time and let the state at time~$m$ be
denoted~$K^{(m)}$.  The initial state~$K^{(0)}$ is finite.  The probability
mass function describing the transition distribution for~$\mcM$ at time~$m$
is given by
\begin{multline}
p(K^{(m+1)}=k\given K^{(m)},\alpha^{(m)},\beta^{(m)})
  =\\
 \frac{1}{k!}\exp\left\{
    -\alpha^{(m)}\sum^{K^{(m)}}_{k'=1}\frac{\beta^{(m)}}{k'+\beta^{(m)}-1}
  \right\}
  \left(\alpha^{(m)}\sum^{K^{(m)}}_{k'=1}\frac{\beta^{(m)}}{k'+\beta^{(m)}-1}
  \right).
\end{multline}
    
\begin{theorem}
  If there exists some~${\baralpha<\infty}$ and~${\barbeta<\infty}$ such
  that~$\forall m$, ${\alpha^{(m)}<\baralpha}$
  and~${\beta^{(m)}<\barbeta}$, then~${\lim_{m\to\infty} p(K^{(m)}=0)
  = 1}$.
\end{theorem}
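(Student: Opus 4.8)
The plan is to reduce the inhomogeneous chain to the homogeneous one already analyzed in Section~\ref{sec:cibp} by a stochastic domination argument, and then to upgrade the Foster--Lyapunov positive-recurrence statement into the absorption statement $\lim_{m\to\infty}p(K^{(m)}=0)=1$. Write $\lambda^{(m)}=\alpha^{(m)}\sum_{k'=1}^{K^{(m)}}\frac{\beta^{(m)}}{k'+\beta^{(m)}-1}$, so the transition law at time $m$ is $\mathrm{Poisson}(\lambda^{(m)})$; note $\lambda^{(m)}=0$ exactly when $K^{(m)}=0$ (empty sum), so $0$ is absorbing.

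First I would dominate $\mcM$ by the homogeneous chain $\tilde K^{(m)}$ with fixed parameters $\baralpha,\barbeta$, whose Poisson mean I denote $\bar\lambda(K)=\baralpha\sum_{k'=1}^{K}\frac{\barbeta}{k'+\barbeta-1}$. The summand $\beta\mapsto\beta/(k'+\beta-1)$ has derivative $(k'-1)/(k'+\beta-1)^2\ge 0$, hence is nondecreasing in $\beta$; since it is also increasing in $\alpha$ and the sum is increasing in $K$, the uniform bounds $\alpha^{(m)}<\baralpha$, $\beta^{(m)}<\barbeta$ give $\lambda^{(m)}\le\bar\lambda(K^{(m)})$ at every step, and $\bar\lambda$ is itself nondecreasing in $K$. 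Because the Poisson family is stochastically increasing in its mean, I can build a step-by-step monotone coupling with $\tilde K^{(0)}=K^{(0)}$ such that $K^{(m)}\le\tilde K^{(m)}$ for all $m$: given $K^{(m)}\le\tilde K^{(m)}$ one has $\lambda^{(m)}\le\bar\lambda(K^{(m)})\le\bar\lambda(\tilde K^{(m)})$, so the next states can be coupled to preserve the ordering. Consequently $p(K^{(m)}=0)\ge p(\tilde K^{(m)}=0)$, and it suffices to prove absorption for the homogeneous chain.

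For $\tilde K^{(m)}$ I would run the FLSC argument exactly as sketched in the body, with $\mcL(k)=k$. Since $\bar\lambda(K)=O(\ln K)$, the drift $\bar\lambda(K)-K\to-\infty$, so there is a finite set $C=\{0,1,\dots,K_0\}$ and $\epsilon>0$ with $\expectation[\mcL(\tilde K^{(m+1)})-\mcL(\tilde K^{(m)})\mid \tilde K^{(m)}=K]\le-\epsilon$ for $K\notin C$; Foster's criterion then yields finite expected hitting time of $C$ from every state. To turn this into absorption, observe that from any $K\in\{1,\dots,K_0\}$ the chain jumps straight to $0$ with probability $e^{-\bar\lambda(K)}$, bounded below by $\delta:=\min_{1\le K\le K_0}e^{-\bar\lambda(K)}>0$. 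Hence on the event that the chain is never absorbed it must revisit $C\setminus\{0\}$ infinitely often (finite return times), and at each such visit it hits $0$ at the next step with conditional probability at least $\delta$; the conditional (Lévy) Borel--Cantelli lemma forces that event to have probability zero. Therefore $p(\tilde K^{(m)}=0)\to 1$, and the domination gives $p(K^{(m)}=0)\to 1$.

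The main obstacle is this final conversion: the FLSC alone only delivers positive recurrence (a quasi-stationary distribution), whereas the theorem asserts outright absorption, so the crux is the two-part argument that the negative drift keeps returning the chain to the finite low-lying set $C$ while the uniform escape probability $\delta$ steadily drains it into $0$. By contrast, the domination step is routine once the monotonicity of $\beta\mapsto\beta/(k'+\beta-1)$ is checked, and the $O(\ln K)$ growth of $\bar\lambda$ is precisely the estimate already used in the main text.
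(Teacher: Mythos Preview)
Your argument is correct, and it follows a genuinely different route from the paper's. The paper works \emph{directly} with the inhomogeneous chain: it bounds the time-$m$ drift by the drift one would get with the fixed parameters $\baralpha,\barbeta$ (the same monotonicity in $\alpha$ and $\beta$ you check), invokes the Foster--Lyapunov criterion with $\mcL(k)=k$ to obtain positive recurrence of $\naturals^{+}$, and then appeals to the quasi-stationary theory of \citet{seneta-verejones-1966a} to pass from ``does not escape to infinity'' to ``is absorbed at $0$ almost surely.'' You instead first build a monotone coupling that dominates $K^{(m)}$ by the homogeneous $(\baralpha,\barbeta)$ chain $\tilde K^{(m)}$, reduce the problem to that chain, and then replace the quasi-stationarity citation by an explicit finite-set/uniform-escape argument (Foster gives recurrent visits to a finite set $C$; from $C\setminus\{0\}$ there is a uniform one-step chance $\delta>0$ of hitting $0$; conditional Borel--Cantelli forces absorption).

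What each approach buys: the paper's version is shorter but leans on two things it leaves implicit --- that the Foster criterion still yields the desired hitting-time conclusion under a drift bound that is uniform in $m$ for an inhomogeneous chain, and that positive recurrence of the nonabsorbing class implies certain absorption. Your coupling cleanly sidesteps the inhomogeneity issue (all Foster-type reasoning is done on a bona fide homogeneous chain), and your $C$-plus-$\delta$ argument makes the absorption step entirely self-contained, with no outside reference needed. The cost is a bit more machinery (the Poisson stochastic monotonicity and the step-by-step coupling), but these are standard and you have verified the key monotonicity $\partial_\beta\bigl(\beta/(k'+\beta-1)\bigr)=(k'-1)/(k'+\beta-1)^2\ge 0$ correctly.
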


\begin{proof}
  Let~$\naturals^{+}$ be the positive integers.  The~$\naturals^{+}$ are a
  communicating class for the Markov chain (it is possible to eventually
  reach any member of the class from any other member) and
  each~$K^{(m)}\in\naturals^{+}$ has a nonzero probability of transitioning
  to the absorbing state~${K^{(m+1)}=0}$, i.e.~${p(K^{(m+1)}=0\given
    K^{(m)}) > 0}$, $\forall K^{(m)}$.  If, conditioned on nonabsorption,
  the Markov chain has a stationary distribution (is
  \textit{quasi-stationary}), then it reaches absorption in finite time
  with probability one.  Heuristically, this is the requirement that,
  conditioned on having not yet reached a restaurant with no dishes, the
  number of dishes in deeper restaurants will not explode.
  
  The quasi-stationary condition can be met by showing that~$\naturals^{+}$
  are positive recurrent states.  We use the Foster--Lyapunov stability
  criterion (FLSC) to show positive-recurrency of~$\naturals^{+}$.  The FLSC
  is met if there exists some
  function~${\mcL(\cdot):\naturals^{+}\to\reals^{+}}$ such that for
  some~$\epsilon>0$ and some finite~$B\in\naturals^{+}$,
  \begin{align}
    \sum^{\infty}_{k=1}p(K^{(m+1)}=k\given
    K^{(m)})\,\left(\mcL(k)-\mcL(K^{(m)})\right)
    & < -\epsilon
    \text{ for } K^{(m)} > B\\
    \sum^{\infty}_{k=1}p(K^{(m+1)}=k\given K^{(m)})\,\mcL(k) 
    & <\infty
    \text{ for } K^{(m)} \leq B.
  \end{align}
  For Lyapunov function~$\mcL(k)=k$, the first condition is
  equivalent to
  \begin{align}
    \left(\alpha^{(m)}\sum^{K^{(m)}}_{k=1}\frac{\beta^{(m)}}{k+\beta^{(m)}-1}\right)
    -K^{(m)} < -\epsilon.
  \end{align}
  We observe that
  \begin{align}
    \alpha^{(m)}\sum^{K^{(m)}}_{k=1}\frac{\beta^{(m)}}{k+\beta^{(m)}-1}
    &<
    \baralpha\sum^{K^{(m)}}_{k=1}\frac{\barbeta}{k+\barbeta-1},
  \end{align}
  for all~$K^{(m)}>0$.  Thus, the first condition is satisfied for any~$B$
  that satisfies the condition for~$\baralpha$ and~$\barbeta$.  That such
  a~$B$ exists for any finite~$\baralpha$ and~$\barbeta$ can be seen by the
  equivalent condition
  \begin{align}
    \left(\baralpha\sum^{K^{(m)}}_{k=1}\frac{\barbeta}{k+\barbeta-1}\right)
    -K^{(m)} < -\epsilon \text{ for } K^{(m)} > B.
  \end{align}
  As the first term is roughly logarithmic in~$K^{(m)}$, there exists
  some finite~$B$ that satisfies this inequality.  The second FLSC
  condition is trivially satisfied by the observation that Poisson
  distributions have a finite mean.  
\end{proof}
\end{document}